\tikzset{ 
table/.style={
  matrix of math nodes,
  row sep=-\pgflinewidth,
  column sep=-\pgflinewidth,
  nodes={rectangle,text width=1em,align=center},
  text depth=1.ex,
  text height=1ex,
  nodes in empty cells,
  left delimiter=[,
  right delimiter={]},
  ampersand replacement=\&
}
}
\newtheorem{remark}{Remark}
\newtheorem{theorem}{Theorem}[section]
\newcommand{\argmin}{\ensuremath{arg\,min}}
\newcommand{\norm}[1]{\left\lVert#1\right\rVert}
\newcommand{\abs}[1]{\mid #1 \mid}
\newcommand{\iif}{ \Leftrightarrow}
\newcommand{\summ}[2]{\underset{#1}{\overset{#2}{\sum}}}
\newcommand{\timess}[2]{\underset{#1}{\overset{#2}{\times}}}
\newcommand{\esp}[1]{\mathbb{E}[#1]}
\newcommand\numberthis{\addtocounter{equation}{1}\tag{\theequation}}
\newcommand{\pdv}[2]{\frac{\partial #1}{\partial #2}}
\newcommand{\reel}{\mathbb{R}}
\newcommand{\tecnameAbrv}{GCE}
\definecolor{lightblue}{rgb}{0.9, 0.9, 1.0}
\definecolor{ffqqff}{rgb}{1,0,1}
\begin{document}
\begin{frontmatter}
\title{Stochastic gradient descent with gradient estimator for categorical features}
\author[Lokad,Litis]{Paul Peseux}
\ead{paul.peseux@gmail.com}
\author[add2]{Thierry Paquet}
\ead{thierry.Paquet@univ-rouen.fr}
\author[add2]{Maxime Berar}
\ead{maxime.berar@univ-rouen.fr }
\author[add1]{Victor Nicollet}
\ead{victor.nicollet@lokad.com}
\address[Lokad]{Lokad, Paris FRANCE}
\address[Litis]{Litis, Rouen FRANCE}
\begin{abstract}
Categorical data are present in key areas such as health or supply chain, and this data require specific treatment. In order to apply recent machine learning models on such data, encoding is needed. To build interpretable models, one-hot encoding is still a very good solution, but such encoding creates sparse data. Gradient estimators are not suited for sparse data: the gradient is mainly considered as zero while it simply does not always exist, thus a novel gradient estimator is introduced. We show its efficiency on different datasets with varied model architectures. This new estimator performs better than common estimators under similar settings. A real world retail dataset is also released. Overall, this paper aims to thoroughly consider categorical data and adapt models and optimizers to these key features.
\end{abstract}

\begin{keyword}
categorical data \sep gradient descent \sep gradient estimation \sep dataset release
\end{keyword}
\end{frontmatter}

\section{Introduction}\label{sec:intro}

Tabular data represents a considerable amount of modern data especially in the healthcare and industrial sectors. Machine Learning has been applied to those tabular data for decades for different tasks such as regression or classification. Boosting methods \cite{Chen2016XGBoostAS, Ostroumova2018CatBoostUB} are widely spread on these data and are still state of the art. After outstanding results on image, speech recognition or text, some attempts to apply deep learning (DL) models on tabular data have been published recently but with a limited impact on the state of the art as presented by \cite{RevisitingDeepForTabular}. Some DL approaches \cite{NODE, Frosst2017DistillingAN, Luo2021SDTRSD} have tried to adapt their architecture  to the specificity of tabular data but none did succeed in overtaking classical machine learning models such as gradient-boosted tree ensembles \cite{DeepTabularSurvey}. 

One of the possible explanation is that DL excels on homogeneous data where embeddings can be learned \cite{MLPandNLP, embeddingSurvey} via stochastic gradient descent \cite{firstSGD} to update their parameters by utilizing the underlying nature of the data like 2D spatial pixel neighborhood. In contrast, there is no such general underlying structure on tabular data. As tabular data often contains categorical data which are not numerical, the inputs of the model consists in the encoding of data. When the categorical data cardinality is limited, one-hot encoding is a good solution. Beyond its simplicity, it creates category-related parameters that are key for model explainability. In this encoding, only one feature is set to $1$ (indicating the presence of the corresponding category), while all other features are set to $0$. During the model optimization through gradient descent, the gradient is only present for the active feature, and all other features have zero gradient. The issue with this is that a zero gradient can halt the optimization process, as the optimizer updates all the parameters, including the ones corresponding to the inactive features. This might explain the underperforming results of DL models on categorical data. This observation applies also on non-deep models whose training rely on gradient descent. We investigate this issue by directing our attention towards the categorical aspect of the data, which is the root cause of the problem, rather than model architectures.

Our main contributions are the following. First, we propose a modification of the standard training loss on categorical data with a related unbiased estimator. Second, we  show that this new gradient estimator outperforms strandard ones on different datasets. Third, we applied this technique to an in-production model using a private dataset that we are releasing as open source for this purpose. This dataset is substantial and pertains to the supply chain, with only a few publicly available datasets in this domain.

The article is organized as follows. After an overview of the recent works on tabular data we point out the issue of applying modern stochastic gradient descent on one-hot-encoded categorical data. Then we propose a novel gradient estimator and show that it is unbiased for a relevant loss on categorical data. We conducted several experiments on public and private datasets that demonstrate the superiority of our proposed gradient estimator over the classical gradient estimator. These results encourage the use of our estimator in the presence of categorical data.

\section{Learning with categorical data}
\subsection{Related works}\label{sec:relatedWorks}

As described in \cite{DeepTabularSurvey} and \cite{RevisitingDeepForTabular}, tabular data exhibit heterogeneity, characterized by high variability of data types and formats, in their underlying structure, unlike images. They involve categorical input attributes and have a strong structure that is unique to each tabular dataset. Modifying a categorical attribute in the input may lead to a complete change in the meaning of the corresponding data, whereas changing a pixel in an image does not fundamentally alter the image. This data kind distinction can lead to different results for the same deep learning architectures, as shown in the case of adversarial learning \cite{MATHOV2022108377}. Even for simpler tasks such as binary or multiclass classification and regression, DL did not surpass yet tree models on tabular data yet as presented in \cite{DLisnotyouneed, DeepTabularSurvey}. Tabular data is depicted as the last “unconquered castle” by \cite{Kadra2021WelltunedSN} and multiple works asses the crucial need of further development in this direction \cite{Fayaz2022}. This holds even though various architectures such as MLP, ResNet, Transformer, NET-DNF \dots have been applied to them.

We can split the architectures  into two categories: the raw DL models and the adapted DL models. The first rely on some known DL models directly applied on tabular data, without any modification of their architecture. One example is the work presented in \cite{DeepTLF}, which attempts to transform the heterogeneous nature of tabular data into a homogeneous numerical representation, in order to apply successful DL methods to this type of data. The second one adapts DL architectures in order to better fit the tabular data specificity \cite{Arik2021TabNetAI, Song2019AutoIntAF, NODE}. 

All these attempts did not outperform the standards models such as XGBoost from \cite{Chen2016XGBoostAS} or CatBoost from \cite{Ostroumova2018CatBoostUB} which are still the state-of-the-art in this domain \cite{DeepTabularSurvey}, as shown on the Adult Census Income (ACI) dataset \cite{incomeDataset}. This dataset is frequently utilized to showcase the handling of categorical data. DL approaches assume that every input feature is relevant to every observation, which can explain their disappointing results on categorical data. In DL architectures, all parameters are typically updated at every iteration except through the use of methods such as Dropout \cite{dropout} or LayerOut \cite{freezing} that are general learning tricks non specific to any kind of data. This assumption may not hold true for categorical data, leading to potential inaccuracies in the training process. 

Hence, there is a requirement for the development of novel approaches that can better account for the inherent characteristics of categorical data and are better equipped to handle the characteristics associated with this type of data.

\subsection{Categorical models and one-hot-encoding}

\small
\begin{table}[h!]
  \caption{Categorical data.}
  \label{tab:catData}
  \begin{footnotesize}
  \begin{center}
  \begin{tabular}{llcc}
    \toprule
    Id & Cat  & Discount ($\%$) & Sales \\
    \midrule                         
    001 & pants     & 20       &  7               \\
    002 & shirt    & 10       &  3                \\
    003 & shirt    & 15       &  2                \\
    \dots & \dots   & \dots    &  \dots             \\
    n & shirt    & 20       &  8                 \\
  \bottomrule
\end{tabular}
\end{center}
\end{footnotesize}
\end{table}
\normalsize

DL methods are designed to work well with numerical data, such as arrays of continuous values, because they are built on mathematical operations that are well-defined for numerical data. Categorical data, on the other hand, refers to data that can take on a limited number of discrete values, and there is no existing ordering of the value. Let us denote \textit{categorical models} the set of models that accept categorical attributes by design and are numerical , i.e. their parameters can be updated through gradient descent. By categorical attributes, we denote an attribute whose possible values belongs to an alphabet of $n_{s}$ symbols $\left\{s_1,\cdots, s_{n_{s}} \right\}$. In Table \ref{tab:catData}, \textit{Cat} is the only  categorical attribute, while pants and shirt are the symbols, and forms the \textit{Cat} alphabet. We stress that the \textit{categorical} aspect applies to the nature of the input attributes: a categorical model can be used for regression. In that sense regular neural networks are not categorical models as they need numerical inputs. Some specific deep models correspond to this definition as wide models described in \cite{wideAndDeep}. 

We present a simple example of a categorical model, which is the relational linear regression. A traditional linear regression predicts a numerical quantity using a numerical input and two parameters, namely the slope and the intercept. The relational linear regression extends it with the slope being shared among all categories of a categorical attribute. The intercept remains shared among all observations.
\newpage

\begin{multicols}{2}
  \begin{equation}\label{catModel}
    \hat{y}(cat, x) = a_{cat}  \times x + b
\end{equation}\break
        \centering
        \includegraphics[width=0.45\textwidth]{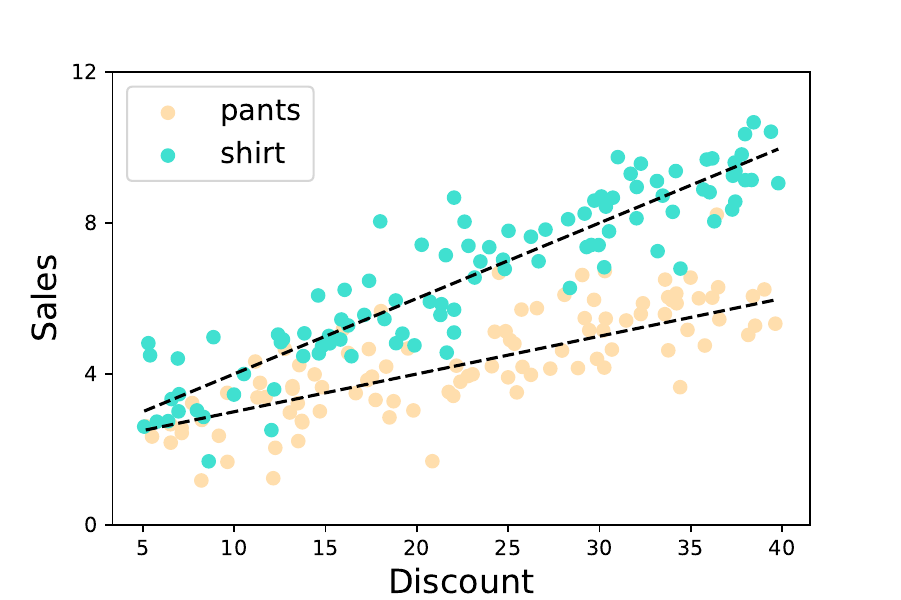}
        \captionof{figure}{Relational linear regression.}
        \label{fig:toydata}
\end{multicols}

A linear regression has $2$ parameters, while a relational linear regression has $1 + n_s$ parameters, with $n_s$ the cardinality of the categorical attribute. Instead of only modeling the relationship between two variables, it utilizes the underlying structure of the data to provide a more accurate representation of the attributes relationship.

Let us apply the categorical model to the inputs presented in Table \ref{tab:catData}, with the goal of predicting sales based on the discount and the category of the item. This application is visualized in the graphical representation shown in Figures \ref{fig:toydata} and \ref{fig:featTokenizerOWN}.

\begin{figure}[h!]
\centering
\begin{tikzpicture}[scale=0.9]
\begin{scope}[every node/.style={thick,draw,minimum size=1cm,fill=green!50,opacity=.2,text opacity=1}]
    \node (Color) at (1,0.3)  {pants};
\end{scope}

\begin{scope}[every node/.style={thick,draw,minimum size=1cm,opacity=.2,text opacity=1}]
    \node (Discount) at (3.5,0.3) {\small 20 $\%$};
\end{scope}

\begin{scope}
    \node (mucolor)  at (1.0,2.8) {$a_{cat}$};
    \node (times)  at (2.75,3.1) {$\times$};
    \node (plus)  at (3.5,3.1) {$+$};
\end{scope}

\begin{scope}
    \node (dataColor)  at (1.0,-0.6) {Cat};
    \node (data)  at (-2.9,0.3)    [anchor=west]  {\small data};
    \node (param)  at (-2.9,2)  [anchor=west]    {\small parameters};
    \node (pred)  at (-2.9,4)   [anchor=west]   {\small prediction};
    \node (dataStore)  at (3.5,-0.6)    {Discount};
\end{scope}

\begin{scope}[every node/.style={thick,draw,minimum size=1cm,fill=green!50,opacity=.8,text opacity=1}]
    \node (pink) at (1.8,2)    {\textcolor{black}{$a_{pants}$}};
\end{scope}

\begin{scope}[every node/.style={thick,draw,minimum size=1cm,fill=green!50,opacity=.2,text opacity=1}]
    \node (blue) at (0.45,2)  {\textcolor{black}{$a_{shirt}$}};
\end{scope}

\begin{scope}[every node/.style={thick,draw,minimum size=1cm,fill=yellow!50,opacity=.8,text opacity=1}]
    \node (M) at (5,2) {$b$};
\end{scope}

\begin{scope}[every node/.style={thick,draw,minimum size=1cm,fill=gray!50,opacity=.2,text opacity=1}]
    \node (y) at (3,4) {$\hat{y}$};
\end{scope}
\begin{scope}[>={Stealth[black]},
              every node/.style={fill=white,circle},
              every edge/.style={thin, color=black}]
    \path [->] (Color) edge[draw=black] (pink);
    \path [->] (Discount)  edge[draw=black] (y);
    \path [->] (pink)  edge[draw=black] (y);
    \path [->] (M)     edge[draw=black] (y);
    
\end{scope}
\end{tikzpicture}
\caption{Relational linear regression accessing parameters. The slope parameters are not concerned by every observation: parameter $a_{pants}$ is used only for the pants data.}
\label{fig:featTokenizerOWN}
\end{figure}
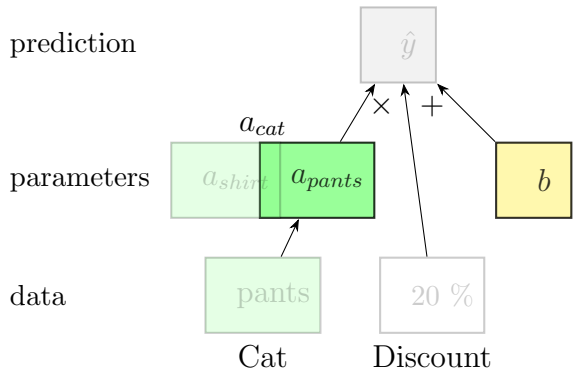

In this application, the parameter $a_{cat}$ has a value for each possible category of \textit{Cat}, and we aim to find the best ones, with the appropriate intercept, in order to build a good predictive model. One of the primary methods to accomplish that is gradient descent. Partly due to the very large amount of data often encountered in practice, \textit{stochastic} gradient descent is used. To apply stochastic gradient descent on categorical models, the categorical data has to be encoded into numerical features. No universally good method of encoding exists and encoding choice should always rely on data (alphabet cardinality, relationships between them \dots). Our focus will be on one-hot encoding, as categorical models rely on this mechanism. One-hot-encoding a categorical variable with cardinality $n$ is performed by creating a $n$ dimensional binary vector. If there are few symbols, there are only a few newly created columns. For example, on data stored in Table \ref{tab:catData}, one-hot-encoding the attribute \textit{Cat} creates the features $is_{pants}$ and $is_{shirt}$.

In high-stakes contexts such as disease diagnostics, interpretability of the model is fundamental  \cite{Stop}. In such scenarios, the human expert is expected to make the final decision based on the explainability of the model's results. Tree-based models are known for their interpretability and have been used to interpret deep learning models \cite{BLANCOJUSTICIA2020105532}. In this direction, using one-hot-encoding  is crucial. Having parameters directly related to the application semantic by giving access to their relation with the input symbols is a requirement for the design of white-box models. In the illustrated example, the variable $a_{pants}$ holds significant meaning, namely the degree of sensitivity of sales in the pants category to the proposed discount. Parameter values not only serve model prediction quality, they are also \textit{interpretable}.  On Model \ref{catModel}, $a_{pants} >a_{shirt}$ means that the pants sales better react to the discount than the shirt ones. Not only is the prediction of the model explainable, but the trained model itself conveys meaning because the parameters have their own semantic. It also maintains the structure of the data: it does not impose any arbitrary ordering on the nominal categories (no intrinsic order).

When dealing with low cardinality attributes, one-hot-encoding is a suitable approach to turn them into numerical values. However, if this approach is used for high cardinality attributes, the curse of dimensionality may arise, as explained in \cite{curse}. In such cases, alternative encoding methods should be considered. The leave-one-out encoding method transforms a categorical attribute into a numerical feature, offering several benefits such as avoiding the curse of dimensionality. However, this approach does not result in interpretable parameters. For the purposes of this article, we will exclude such high-cardinality categorical attributes, which are not common in domains such as health or supply chain. We also exclude any attribute that has a native ordinal encoding, like size attribute with possible values $\{$ small ;  medium ; large$\}$.

Applying stochastic gradient descent on categorical models raises an issue as common gradient update techniques are not designed for one-hot encoded categorical features: not every symbol of a categorical attribute is present in every observation of a dataset while regular numerical models assume that every feature is present on every observation. Thus we propose an updated version of gradient estimation used to update categorical parameters. Its specificity is to take into account the categorical features of the model. 

\subsection{Problem and one-hot-encoding notations}\label{notations}

We consider the supervised learning set up with a given set of training labeled dataset $\mathcal{Z} = \{ z_i = (\textbf{C}_i; y_i); i = 1 \dots  n \}$, with the attribute vectors $\textbf{C}_i \in \timess{c = 1}{C} A_c$ where each $A_c$ is an alphabet, i.e. a finite set of $\abs{A_c}$ symbols. Thanks to one-hot encoding, one can turn the attribute vectors $\textbf{C}_i$ into numerical features $X_i \in \{0 ; 1\}^p$ where $p = \summ{c=1}{C} \abs{A_c}$. Let's define an arbitrary order on the union of all the alphabets: $\{ s_k\}_{k \leq p} = \underset{c}{\bigcup} A_c$. Then:
\begin{equation} \label{eq:symbol}
    \forall (C_i, \cdot) \in \mathcal{Z} \quad \forall k \leq p \quad \textbf{C}_i = s_k \iif X_{i}^k = 1
\end{equation}

\begin{figure*}
\centering
\begin{subfigure}{0.48\linewidth}
\centering
\begin{tikzpicture}[scale=0.65]

\draw (-0.9,10.5) --(-1,10.5) -- (-1,2.2) -- (-0.9,2.2) ;
\draw (0.9,10.5) --(1,10.5) -- (1,2.2) -- (0.9,2.2) ;

\node (x1)   at (0,10)  {$x_1$};
\node (x2)   at (0,9)  {$x_2$};
\node (x3)   at (0,8)  {$x_3$};
\node (x4)   at (0,7)  {$x_4$};
\node (x5)   at (0,6)  {$x_5$};
\node (xdot) at (0,5)  {\dots};
\node (xpm1) at (0,3.9)  {$x_{p-1}$};
\node (xp)   at (0,2.9)  {$x_p$};

\node (feat1) at (-2,9)   {\textcolor{orange}{$cat_1$}};
\node (feat2) at (-2,6.5) {\textcolor{black}{$cat_2$}};
\node (feat3) at (-2,3.4) {\textcolor{green}{$cat_C$}};

\node (X)     at (-1.9,5) {\Large $X = $};
\node (Theta) at (5.7,5)  {\Large $ = \theta $};

\draw [decorate, decoration = {brace,mirror}, color=orange] (-0.6,10.2) -- (-0.6,7.8);
\draw [decorate, decoration = {brace,mirror}, color=black]  (-0.6,7.2)  -- (-0.6,5.8);
\draw [decorate, decoration = {brace,mirror}, color=green]  (-0.6,4.1)  -- (-0.6,2.7);

\draw (3.1,10.5) --(3,10.5) -- (3,1.7) -- (3.1,1.7) ;
\draw (4.9,10.5) --(5,10.5) -- (5,1.7) -- (4.9,1.7) ;

\node (theta1)   at (4,10)   {$\theta_1$};
\node (theta2)   at (4,9)    {$\theta_2$};
\node (theta3)   at (4,8)    {$\theta_3$};
\node (theta4)   at (4,7)    {$\theta_4$};
\node (theta5)   at (4,6)    {$\theta_5$};
\node (theta6)   at (4,5)    {$\theta_6$};
\node (thetadot) at (4,3.5)    {\dots};
\node (thetaM)   at (4,2.3)  {$\theta_M$};

\path [->] (x1) edge[draw=black] (theta1);
\path [->] (x1) edge[draw=black] (theta2);
\path [->] (x1) edge[draw=black] (theta3);
\path [->] (x2) edge[draw=black] (theta4);
\path [->] (x3) edge[draw=black] (theta5);
\path [->] (x3) edge[draw=black] (theta6);
\path [->] (xp) edge[draw=black] (thetaM);

\draw [decorate, decoration = {brace,mirror}, color=orange]  (4.5,4.8) -- (4.5,10.2) ;

\end{tikzpicture}
\caption{Generic notations.}
\label{fig:generalnotations}
\end{subfigure}
\begin{subfigure}{0.48\linewidth}

\begin{tikzpicture}[scale=0.7]

\draw (-0.9,10.5) --(-1,10.5) -- (-1,6) -- (-0.9,6) ;
\draw (0.9,10.5) --(1,10.5)   -- (1,6)  -- (0.9,6) ;

\node (blue)  at (0,10) {\small $is_{pants}$};
\node (pink)  at (0,9)  {\small $is_{shirt}$};
\node (Paris)     at (0,7)  {\small $b$};

\node (color) at (-2,9.5)   {\small $cat$};
\node (store) at (-2,7) {\tiny $intercept$};

\node (X)     at (-1.9,8) {\Large $X = $};
\node (Theta) at (5.7,8)  {\Large $ = \theta $};

\draw [decorate, decoration = {brace,mirror}] (-1.1,10.2) -- (-1.1,8.8);

\draw (3.1,10.5) --(3,10.5) -- (3,6) -- (3.1,6) ;
\draw (4.9,10.5) --(5,10.5) -- (5,6) -- (4.9,6) ;

\node (mublue)   at (4,10)   {$a_{pants}$};
\node (mupink)   at (4,9)    {$a_{shirt}$};

\node (muparis)  at (4,7)    {\small $b$};

\path [->] (blue)   edge[draw=black] (mublue);
\path [->] (pink)   edge[draw=black] (mupink);
\path [->] (Paris)  edge[draw=black] (muparis);
\end{tikzpicture}
\caption{Notations applied to Relational linear regression}
\label{fig:notationsModel1}
\end{subfigure}
\caption{One-hot encoding for categorical models notations. The parameter $\theta$ is not entirely dependent on each observation. The arrows serve to summarize the interpretability of the model.}
\label{fig:notations}
\end{figure*}
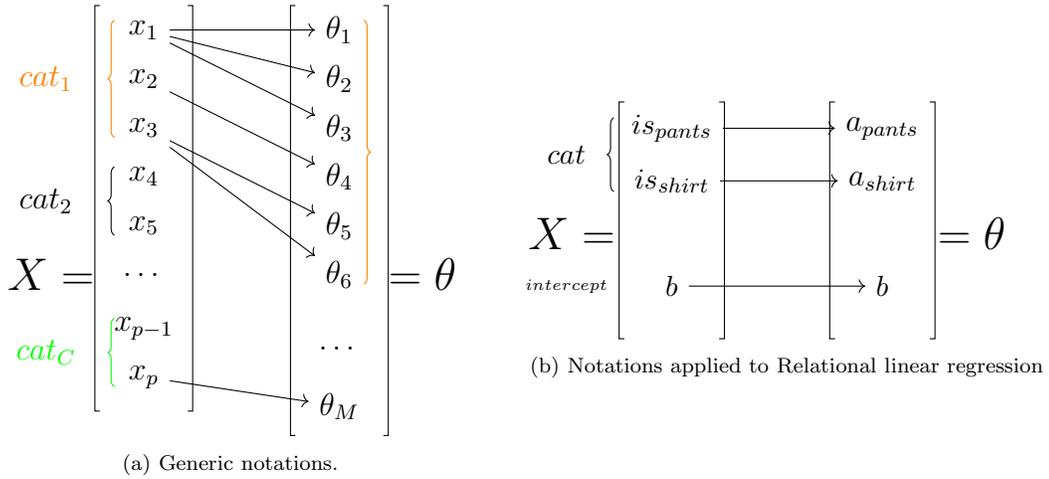

We aim to find the best parameter $\theta^\star \in \mathbb{R}^m$ ($m \geq p$) to minimize the loss $F_{\theta}$ on the whole dataset. Figures \ref{fig:notations} illustrates the formal definition. In Figure \ref{fig:generalnotations}, $\{ \theta_i  \}_{i = 1..6}$ are related to the first feature $\textcolor{orange}{cat_1}$. On relational linear regression \ref{catModel}, such notations give Figure \ref{fig:notationsModel1} where the slope is shared among the category while the intercept is shared by all the observations. 
\begin{multicols}{2}
  \begin{align*}
    f: \quad &\Theta \times \mathcal{Z} \longrightarrow \mathbb{R}\\
            &\theta, (X,y) \longrightarrow  f_{\theta}(X,y)\\
\end{align*}\break
\begin{align*}
\theta^\star = \underset{\theta}{\argmin} \quad F_\theta &= \underset{\theta}{\argmin} \sum_{X,y \in \mathcal{Z}} f_\theta (X,y)\\
             &= \underset{\theta}{\argmin} \sum_{i=1\dots n} f_\theta (X_i,y_i)
\end{align*}
\end{multicols}

\subsection{Gradient descent issues with categorical features}\label{sec:motication}
In classical stochastic gradient descent, an unbiased estimator of the gradient is used. Instead of computing the complete gradient on all observations, the observations are divided into \textit{batches} and the gradient is estimated on them:

\begin{equation}\label{eq:trad-grad}
\nabla_\theta F =  \frac{1}{n} \sum_{obs} \nabla_\theta f_{obs}  = \frac{1}{n} \sum_{i \leq n} \nabla_{\theta} f(X_i, y_i) = \frac{1}{n} \sum_{batch} \quad \sum_{obs \in batch} \nabla_\theta f_{obs}
\end{equation}

In large datasets, it is computationally expensive to compute the exact gradient. To address this, the gradient is estimated by observation batches. Stochastic gradient descent methods are effective in finding the minimum notably because the gradient estimator on a batch is unbiased, as proven by \cite{BachProof}. Regarding categorical models and one-hot-encoding, we stress that categorical parameters are not equally concerned by the batch, especially when the batch is made of a single observation. For example in Model \ref{catModel} $a_{pants}$ is only used on observations that concern a pants product. By construction via one-hot-encoding, each observation concerns one and only one symbol for each categorical attribute. It would be rational to solely update the parameters of the concerned symbol, whereas Equation \ref{eq:trad-grad} computes all the gradient components. In this instance, the gradient $\nabla_{a_{pants}}F$ reduces to:

\begin{equation}\label{eq:gradientDecomp}
\nabla_{a_{pants}} F = \frac{1}{n} \sum_{obs} \nabla_{a_{pants}} f_{obs} = \frac{1}{n} \sum_{batch} \underset{cat(obs) = pants}{\sum_{{obs \in batch}}} \nabla_{a_{pants}} f_{obs}
\end{equation}

What would be the parameter's gradient of a symbol that is not present at all in the dataset? What would be the gradient of $\mu_{hat}$ in Model \ref{catModel} with no hat products in the batch? The set $\{ obs \in batch | cat(obs) = pants \}$ from Equation \ref{eq:gradientDecomp} might be empty. In this case, the parameters related to the \textit{pants} symbol are not active in the batch while they are assigned a zero gradient. Yet, an undefined gradient is not equivalent to a zero-gradient, as shown in Equations \ref{eq:muEncoding}. Thanks to one-hot-encoding, we have prior information about the gradient: if we encounter an observation that does not involve the symbol $s_k$, we know with certainty that the gradient of its related parameters does not exist whereas it is numerically zero in standard implementations. This numerically zero gradient does not convey any information and it should not be used for parameters updates. This atomic property of categorical attributes is completely ignored when using standard SGD approaches. Notice, that this problem is further amplified among successive batches when some improved DL optimization operators are introduced such as gradient with momentum. In this case the structural zeros of categorical data are broadcast among successive batches, leading to biased gradient estimation.

\begin{align*}
    a_{cat} = a_{pants} \times &is_{pants} + a_{shirt} \times is_{shirt} \numberthis \label{eq:muEncoding} \\
    \frac{\partial a_{cat}}{\partial a_{pants}}\mid_{cat = shirt} = \emptyset &\quad ; \quad \frac{\partial a_{cat}}{\partial a_{shirt}}\mid_{cat = pants} = \emptyset \\
\end{align*}
This issue  especially concerns under-represented symbols and small batches. The smaller the cardinality of the symbols and the batch size, the higher the probability of the symbol not being present in the batch. When a symbol is not present in the batch, we state that its related parameters should not be modified. The encoding of categorical data should not be included in the gradient-exposed portion of the model, as it should not affect the model's parameter updates.

\subsection{Convergence guarantees of stochastic gradient descent}\label{sec:BachBottou}

In \cite{BachProof} the authors have unified adaptive optimizers such as AdaGrad \cite{Adagrad} or Adam \cite{adam} and proved their convergence properties. They demonstrated that the exact value of the gradient $\nabla_\theta F$ is not required; instead, an unbiased stochastic estimation $\nabla_\theta f$ is sufficient. To achieve convergence with these optimizers, three assumptions are needed regarding the function being minimized. Firstly, $F$  must be bounded below. Secondly, the norm of $\nabla_\theta f$ must be bounded above by an affine function of the norm of $\nabla_\theta F$. Note that it is not the strongest version of guarantees offered by \cite{BachProof}. Thirdly, $\nabla_\theta F$ must be L-Liptchitz-continuous with respect to the $l_2$-norm:
\begin{equation}\label{eq:assumption-3}
    \forall \quad \theta, \theta' ; \quad \norm{\nabla F (\theta) - \nabla F (\theta')}_2 \leq L\norm{\theta-\theta'}_2
\end{equation}

Not all models meet these assumptions, but both the traditional linear regression that minimizes the mean squared error and its relational version satisfy them. We have $F_{\vec{a}, b} = \sum_{x_i, c_i, y_i} (a_{c_i} x_i + b - y_i)^2$ which is positive. Then for the second one the decomposition of the expected gradient $\nabla_\theta F$ as a sum satisfies it. Finally the third one is also verified, with $m_x = \underset{i}{\max} \quad x_i$
\begin{align*}
    \forall \quad \vec{a},b, \vec{a}',b' ; \quad \pdv{F(\vec{a},b)}{b}  - \pdv{F(\vec{a}',b')}{b} &= 2 \sum_i (\vec{a}_{c_i} - \vec{a}_{c_i}')x_i + (b-b')\\
    \norm{\nabla_b F (\vec{a},b) - \nabla_b F (\vec{a}',b')}_2 &\leq 2n m_x \times (\norm{\vec{a} - \vec{a}'}_2 + \norm{b-b}_2)
\end{align*}
The same logic applies with $\nabla_a F$. Thus the relational version of the linear regression converges if optimized with one version of the adaptive optimizers of \cite{BachProof}.

\section{Solution: gradient estimator for categorical features}

The problem with stochastic gradient descent on one-hot-encoded categorical data arises from the updating of all parameters at each iteration. To address this issue, we propose a new approach that combines a modification of the training loss with a novel gradient estimator. This gradient estimator has been specifically designed to handle categorical data. By combining these two elements, the solution provides a more effective and efficient way of training models on categorical data. The experimentation results show the benefits of this new approach, which has the potential to significantly improve the performance of gradient-based machine learning models on categorical data. All the following is based on the observation that if $\{symbol(obs) = s_k / obs \in batch\}$ is empty, parameters related to the $s_k$ symbol should \textbf{not} impact the parameters update in any way. The proposed gradient estimator makes the difference between a zero gradient and a undefined gradient. Then one needs to count each symbol occurrence and to apply the unbiased gradient estimator. Therefore, one needs to divide the accumulated gradient by the cardinality of $S_k = \{obs \in batch / symbol(obs) = s_k \}$. If this set is empty, parameters related to the $s_k$ symbol should not be updated. This is presented in Algorithm \ref{alg:DivideByTheGood}. Note that this quantity varies at each iteration for every symbol of every categorical parameter.

To support this method, we modify the loss function itself to mirror what we truly aim to minimize while working on categorical data. It gives the following loss:
\begin{equation}\label{eq:realLoss}
    \Tilde{F}_{\theta} = \frac{1}{p} \summ{k=1}{p} \sum_{X,y \in S_k}  \frac{1}{\# S_k} f_{\theta} (X, y)
\end{equation}

With this loss objective function, randomly sampling from $\mathcal{Z}$ and simply summing the gradients of the parameters no longer results in an unbiased gradient estimator. It is necessary to calculate the number of terms contributing to the gradient estimator for each symbol of each categorical parameter. The  $\Tilde{F}_{\binom{\mathcal{Z}}{m} \theta}$ from Equation \ref{eq:unbiased-estimator} properly does it.

\begin{equation}\label{eq:unbiased-estimator}
    \Tilde{F}_{\binom{\mathcal{Z}}{m} \theta} = \frac{1}{p} \summ{k=1}{p} \sum_{X,y \in S_k \cap \binom{\mathcal{Z}}{m}}  \frac{1}{\#[ S_k \cap \binom{\mathcal{Z}}{m}]} f_{\theta} (X, y)
\end{equation}

$\Tilde{F}_{\binom{\mathcal{Z}}{m} \theta}$ is a random estimator of $\Tilde{F}_{\theta}$ where $m$ observations (over the $\mathcal{Z}$) are uniformly drawn. It is the gradient estimator for categorical features (GCE) used by Algorithm \ref{alg:DivideByTheGood}. This estimator is unbiased, proof can be found in \ref{theorem:unbiased}:

\begin{equation*}
    \esp{\nabla \Tilde{F}_{\binom{\mathcal{Z}}{m}}} = \nabla \Tilde{F}_{\theta}
\end{equation*}

This is a sufficient condition for convergence in the previously presented setting Section \ref{sec:BachBottou} as soon as the target loss satisfies regularity conditions. The loss function depicted in Equation \ref{eq:realLoss} seems similar to loss used for classification with unbalanced \textit{output} categories. Let's recall that what we propose here is different as we consider unbalanced \textit{input} symbols. In the case where symbol groups have the same size $C$ then the objective function resumes to $\Tilde{F}_{\theta}$:

\begin{align*}
    \Tilde{F}_{\theta} &= \frac{1}{p}            \summ{k=1}{p} \sum_{X,y \in s_k} \frac{1}{\# s_k} f_{\theta} (X, y)\\
               &= \frac{1}{p}            \summ{k=1}{p}  \frac{1}{C} \sum_{X,y \in s_k}       f_{\theta} (X, y)\\
               &= \frac{1}{p \times C}   \sum_{X,y \in \mathcal{Z}}                                      f_{\theta} (X, y)\\
                &= \frac{1}{\#\mathcal{Z}} \sum_{X,y \in \mathcal{Z}}                                      f_{\theta} (X, y) \\
                &= F_\theta
\end{align*}

as the $p$ symbol groups form  a partition of $\mathcal{Z}$. In this case, our proposed gradient estimator is proportional to the classic one. If one uses the vanilla optimizer, GCE is equivalent to the classic one with a bigger learning rate: 
\begin{equation*}
    \theta_t = \theta_{t-1} - \alpha g_t
\end{equation*}

In this scenario, the gradient's scale is directly related to the learning rate. However, this relationship does not hold true for adaptive optimizers which are highly dependent on the learning rate. In the case of Adam, the update parameter is approximately bounded by the learning rate, making the scale transfer irrelevant. 

\begin{algorithm}
\caption{\textbf{gradient estimator for categorical features}}\label{alg:DivideByTheGood}
\begin{algorithmic}[5]
                
\Require $\mathcal{Z}$: data
\Require $update(\cdot  , \cdot  )$: chosen optimizer
\Require $\theta_0$: Initial parameter vector 
\State $t \gets 0$
\While{$\theta_t$ not converged}
    $t \gets t + 1$
    \State Divide $Z$ in $Batches$
    \For{batch $\in$ Batches}
        \For{symbol $\in$ Alphabet}    
            \State $c_{symbol} \gets 0$
        \EndFor
        \State $\textbf{g} \gets \vec{0}$
        \For{X, y $\in$ batch}
            \State $c_{symbol(X)} \gets c_{symbol(X)} + 1$
            \State Compute $\nabla_{\theta_{t-1}} f_{\theta_{t-1}}(X)$ thanks to $y$
            \State $\textbf{g} \gets \textbf{g} + \nabla_{\theta_{t-1}} f_{\theta_{t-1}}(X)$ \Comment{\textbf{\textcolor{blue}{\small accumulate gradient}}}
        \EndFor
        \State $\theta_{t} \gets \theta_{t-1}$
        \For{$symbol \in Alphabet$}
            \If{$c_{symbol} > 0 $}  \Comment{\textcolor{blue}{\small \textbf{a undefined gradient is not a zero-gradient}}}
                \State $\theta_{t, symbol} \gets update(\theta_{t-1, symbol}, \frac{1}{c_{symbol}}\textbf{g}_{symbol})$ \Comment{\textbf{\textcolor{blue}{\small scaled gradient}}}
            \EndIf
        \EndFor
    \EndFor 
\EndWhile
\end{algorithmic}
\end{algorithm}

\subsection{GCE on relational linear regression}
Let's consider the data from Table \ref{tab:catData} and compare the value of the gradient after the first iteration with the classical gradient estimator and GCE. Lets consider a  a batchsize of 3, so the first iteration concerns the 3 first line of the table, noted $\{(x_1, y_1),(x_2, y_2),(x_3,y_3)\}$. With $\Tilde{g}_{\theta}$ the estimated gradient of $\Tilde{F}_\theta$ with the GCE method and $g_{\theta}$ the classical one of $F_\theta$:
\begin{align*}\label{eq:GCE-nodiff}
    g_{b}  = \Tilde{g}_{b} &= \frac{1}{3}( \nabla_b f(x1, y1) +  \nabla_b f(x2, y2) +  \nabla_b f(x3, y3)) \\ 
    g_{a_{pants}} &= \frac{1}{3}( \nabla_{a_{pants}} f(x1, y1) + 0 +  0)\\
    \Tilde{g}_{a_{pants}} &= \frac{1}{1}( \nabla_{a_{pants}} f(x1, y1))\\
    g_{a_{shirt}} &= \frac{1}{3}( 0 + \nabla_{a_{shirt}} f(x2, y2) + \nabla_{a_{shirt}} f(x3, y3))\\
    \Tilde{g}_{a_{shirt}} &= \frac{1}{2}( \nabla_{a_{shirt}} f(x2, y2) + \nabla_{a_{shirt}} f(x3, y3) )\\
    g_{a_{hat}} &= 0 \quad \text{but} \quad \Tilde{g}_{a_{hat}} = \emptyset
\end{align*}

This very simple example with only one categorical attribute with a two element alphabet highlights the specificity of our proposed gradient estimator. As spotted by the equality $g_{b}  = \Tilde{g}_{b}$, if the parameter is not considered as categorical, this does not change anything to its gradient estimation. The difference between $g_\theta$ and $\Tilde{g}_\theta$  have a bigger impact on the parameter updates when there are multiple categorical parameters.

\section{Experimental and Results}\label{Results}

We have implemented Algorithm \ref{alg:DivideByTheGood} in two different scenarios and programming languages: DL models and categorical models both using one-hot-encoded categorical data. In both cases, we aim to assess the impact of GCE. To evaluate its effectiveness, we compare its performance with the current treatment of categorical parameters in batch gradient descent. We use public datasets listed in Table \ref{tab:datasetChar} as well as a private dataset from the supply chain domain for our evaluations. The chosen metrics for evaluating performance are the mean squared error (MSE) for regression tasks and error rate (i.e., $1 - Accuracy$) for classification tasks.

\begin{table}[h!] 
  \caption{Datasets characteristics}
  \label{tab:datasetChar}
  \begin{footnotesize}
  \begin{center}
  \begin{tabular}{l|rrrrrrr}
    \textbf{Dataset} & Chicago & ACI & compas & DGK  & Forest Cover & KDD99 & UsedCars \\
    \midrule
    instances        & 194m    & 48k & 7.2k   & 72k  & 15k          & 494k  & 38k      \\
    \midrule 
    max cardinality  & 7.9k    & 42  & 341    & 1k & 40             & 66    & 1.1k     \\ 
    \bottomrule		
\end{tabular}
\end{center}
\end{footnotesize}
\end{table}

\subsection{Deep Learning}
In our study, we utilized PyTorch \cite{pytorch} for implementing our proposed solution for DL models. The framework provides ease in updating the gradient of every parameter using Algorithm \ref{alg:DivideByTheGood}. The code and the corresponding experiments can be accessed through the GitHub repository \footnote{\url{https://github.com/ppmdatix/GCE}}. To evaluate the effectiveness of our solution, we conducted experiments on six different datasets with categorical data:

Adult Census Income (ACI) dataset \cite{incomeDataset} aims to predict the wealth status of individuals, Compas dataset predicts the likelihood of re-offending among criminal defendants, Forest Cover dataset \cite{ForestCover} predicts the forest cover type based on categorical characteristics of $30m^2$ forest cells, KDD99 dataset \cite{KDD99} aims at predicting cyber-attacks, Don't Get Kicked (DGK) dataset \cite{DGK} predicts whether a car purchased at auction is a good or a bad buy. Used Cars datasets from Belarus is presented in \ref{subsec:public}.

In order to only measure the impact of GCE, we only use those categorical variables in our experiments. Those dataset tasks are quite easy. As a consequence we use small networks to highlight our approach. The MLP network is made up of 3 dense layers of sizes $[4,8,4]$. We also perform experiments on a ResNet-like network very similar to \cite{RevisitingDeepForTabular}.
We have tested three different optimizers with their default settings: SGD (vanilla), AdaGrad and Adam. Tests have been run on several batch sizes: $2^{5 \dots 10}$. To record results, each experiment has been run 10 times. Results are reproducible in the repository and are recorded in Tables  \ref{tab:resultsMLP32} \ref{tab:resultsRESNET32} \ref{tab:resultsMLP64} \ref{tab:resultsRESNET64} \ref{tab:resultsMLP128}  \ref{tab:resultsRESNET128} \ref{tab:resultsMLP256} \ref{tab:resultsRESNET256} \ref{tab:resultsMLP512} \ref{tab:resultsRESNET512} \ref{tab:resultsMLP1024}     \ref{tab:resultsRESNET1024} in the  \ref{sec:app-DL}. In our experiments, we found that the use of GCE resulted in improvement in loss on the test dataset. Figure \ref{fig:ACIresults} presents the performance of GCE on the Adult Census Income (ACI) dataset. The bigger the batch, the less GCE outperforms the classical estimator. This is logical as in big batches, more symbols are concerned. This proves the need to specifically handle stochastic gradients on categorical data. Results in different settings demonstrate the advantage of using GCE whatever the optimizer. For instance, while AdaGrad has been designed to handle gradients on sparse data (including one-hot encoded data), the use of GCE still resulted in a clear improvement in performance. 
It is noteworthy that our experiments utilized compact network architectures and solely concentrated on the categorical characteristics of the dataset. This was done to isolate the impact of GCE, thereby excluding input variables such as "age" or "income" on the ACI dataset. Despite these stringent limitations, our approach achieved an accuracy of $83\%$ (as shown in Table \ref{tab:resultsRESNET64}) on this dataset when employing GCE. This result is comparable to the state-of-the-art, as reported in \cite{DeepTabularSurvey}. However, only boosting methods have exceeded $87\%$ accuracy, and they have employed all the features, including the non-categorical ones.

\begin{figure}[!ht]
  \centering
  \includegraphics[width=0.7\linewidth]{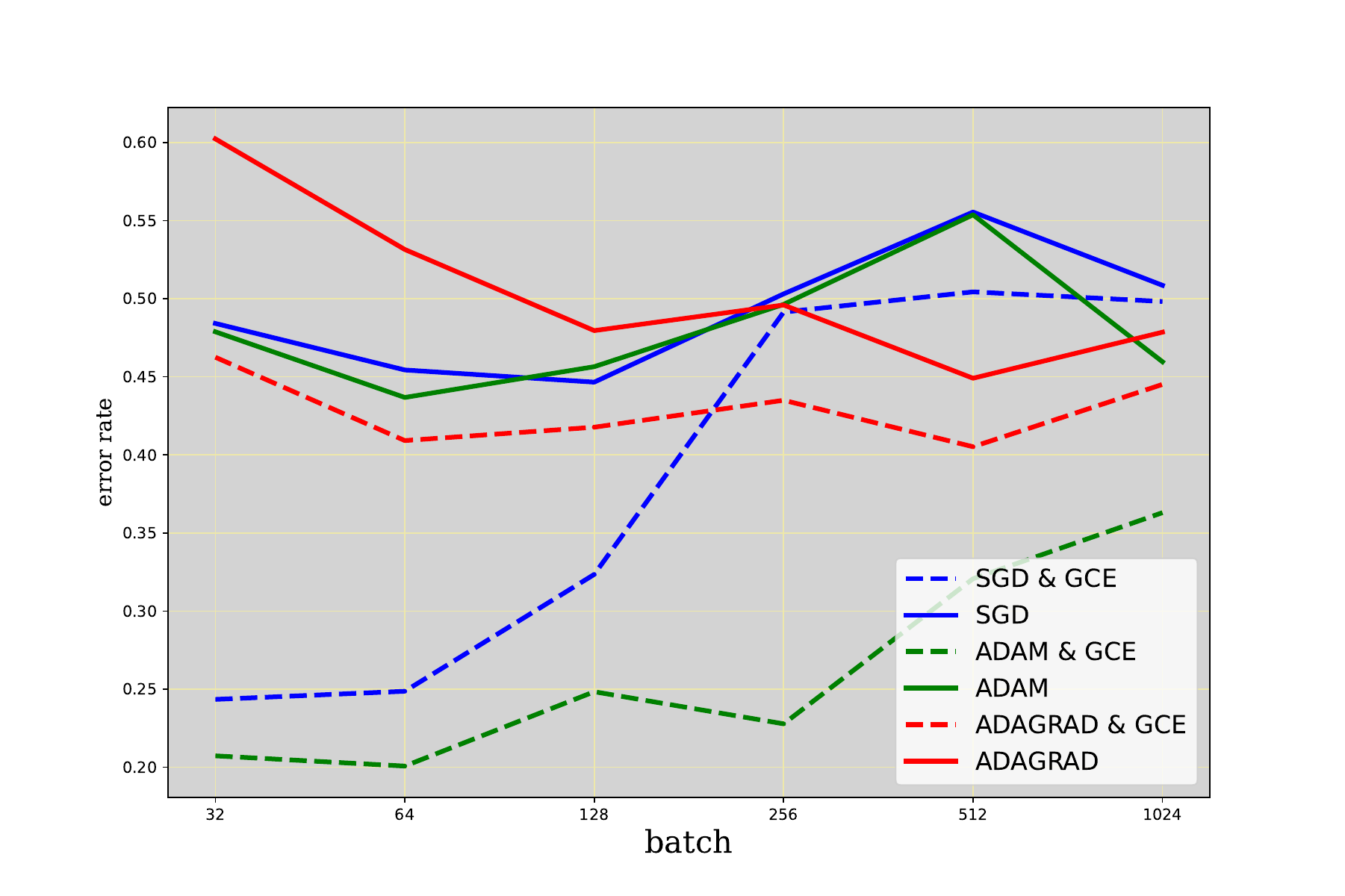}
  \caption{Results (error rate) on the ACI dataset with the MLP network. The dashed curves represent experiments with \textbf{GCE} and show an improvement on the loss for every optimizer used. As the batch size increases, the difference between GCE and  the traditional estimator decreases. This is a logical result because larger batches have a higher probability of including each category.}
  \label{fig:ACIresults}
\end{figure}

\subsection{Categorical models on public datasets}\label{subsec:public}
The experiments for categorical models were conducted using the Envision Domain Specific Language for Supply Chain, a Python-like implementation of SQL designed for supply chain problems. This language includes a differentiable programming layer as described in \cite{Peseux2021DifferentiatingRQ} that provides access to the gradients of categorical models. Stochastic optimization on a relational linear regression with Adam were compared on two publicly available datasets: the Chicago Taxi ride dataset \cite{Chicago} and the Belarus used car dataset \cite{UsedCars}.

For each ride of the Chicago Taxi dataset, we use the taxi identifier, distance, payment type and the tip amount. We use an extended version of the relational linear regression to predict the tip based on the trip distance and the payment type. 
The slope depends on the taxi and the payment method, the intercept remains shared among all the trips, as presented in Equation \ref{eq:otherModel}:
\begin{equation}\label{eq:otherModel}
    \hat{tips} = (\gamma_{\text{taxi}} \times \mu_{\text{payment}}) \times distance + b 
\end{equation}
There is one $\gamma$ per taxi and also one $\mu$ per payment method, that would fit the presented setting with the $Taxi \times Payment$ cross vector construction. As the intercept is shared among all taxis, the dataset is unsplittable although a model based on Equation \ref{eq:splitable}
\begin{equation}\label{eq:splitable}
\hat{tips} = \gamma_{\text{taxi}}  \times distance + b_{taxi}
\end{equation}
could be split into different datasets (one per taxi) and thus we would be in the classical setting of a linear regression.

We also worked on the Belarus used cars dataset. It contains vehicle attributes. We take into account the car manufacturer, the production year, the origin region of the car to predict the selling price of the car as presented in Equation \ref{eq:otherModelUsedCars}.
\begin{equation}\label{eq:otherModelUsedCars}
    \hat{price} = (\gamma_{\text{manufacturer}} \times \mu_{\text{region}}) \times year + b 
\end{equation}

As seen in Table \ref{tab:envisionResult}, the relational batch performed better with our proposition based on Algorithm \ref{alg:DivideByTheGood} with the following setting: 30 epochs ; optimizer Adam with default setting ; batch size of 1. Experiment was reproduced 20 times. 
\begin{table}[h!]
  \caption{Results (RMSE) with categorical models }
  \label{tab:envisionResult}
  \begin{footnotesize}
  \begin{center}
  \begin{tabular}{l|cc}
    \toprule
    Dataset      & Adam         & Adam \& GCE       \\
    \midrule                                                                                     
    Chicago Ride & 35.58 $\pm$ 1.11 & \textbf{9.45 $\pm$ 1.63} \\
    Used Cars & 7.10 $\pm$ 2.45 & \textbf{0.08 $\pm$ 0.01} \\
  \bottomrule
\end{tabular}
\end{center}
\end{footnotesize}
\end{table}


\subsection{Categorical models on a real case}
We have successfully deployed to production such categorical model at Lokad, a French company specialized in supply chain optimization, in order to weekly forecast sales of Celio, a large retail company. 
The dataset is open sourced (with anonymization) and presented \footnote{soon}. The dataset contains 3 years of history and concerns $100k$ different items. The dataset contains multiple categorical attributes for each item.
The objective is to forecast sales at the item level. The implemented categorical model is similar\footnote{we do not disclose the actual model for confidentiality reasons.} to the following:

\begin{align*}\label{eq:celioModel}
    \hat{y}(item, week) = \quad &\theta_{store(item)} \times \theta_{color(item)} \times \theta_{size(item)} \times\\
      & \Theta [group(item), WeekNumber(week)]
\end{align*}

$\Theta [group(item), WeekNumber(week)]$ is a parameter vector that can be seen as a function that aims to capture the annual seasonality for a given group of items:
\begin{equation*}
    \Theta : Groups \times [| 1, 52|] \longrightarrow \mathbb{R}
\end{equation*}

We employed Adam optimizer with its default values along with GCE and stochastic gradient descent for updating the parameters. The use of GCE results in a significant improvement in the performance of the categorical model as compared to the classical gradient estimator. The testing dataset's final loss, measured in terms of decayed MSE, is about an order of magnitude better with GCE. 
However, it is worth noting that while GCE works well in practice, multiplicative models do not meet the assumptions outlined in Section \ref{sec:BachBottou}. Hence, there are no convergence guarantees, and the third assumption remains unsatisfied (see \ref{sec:app-multi-no-gaurantee}). This is not harmful because  setting from Section \ref{sec:BachBottou} is a sufficient one but not necessary to
observe convergence in practice.

\section{Conclusions}\label{sec:conclusion}

This work focuses on the challenge of using stochastic gradient descent for machine learning on categorical data. In the literature, one-hot-encoding is proposed as a solution for creating interpretable models from categorical data, however, this encoding can result in incorrect gradients and incorrect training results. The gradient estimator proposed in this paper overcomes this issue by stating that an undefined gradient should not be treated as a zero-gradient. This new estimator allows for the correct treatment of categorical data in gradient-based models, including DL. The results of the study, including code and details, are open-sourced and demonstrate the utility of the proposed solution on various datasets, including an in-production supply chain model. This dataset is made publicly available for further evaluation and study.

The main contribution of this work is to shed light on the under-appreciation and neglect of categorical data in both public datasets and machine learning as a whole. Despite their widespread use in many key areas, such as health and supply chain, categorical data have not received adequate attention in the field. By highlighting this issue, we hope to inspire further research and encourage the development of methods that specifically address the unique challenges posed by categorical data. For example, one potential solution could be the use of GCE, which is specifically designed to handle this type of data. As a conclusion, our aim is to bring categorical data to the forefront of machine learning research and spur the development of new techniques that better account for the specific requirements of these data.

\section*{Acknowledgment}
This work was supported by the University of Rouen and the French company Lokad. We would like to thank Gaëtan Delétoille, Antonio Cifonelli and Joannes Vermorel for interesting discussions on the topic. A special thanks to Kevin Baumann and Baptiste Miceli for their help on the Celio dataset.

\color{black}
\bibliography{sample}

\begin{thebibliography}{}

\bibitem[Archive, 1999]{KDD99}
Archive, U.~K. (1999).
\newblock Kdd99 dataset.

\bibitem[Arik and Pfister, 2021]{Arik2021TabNetAI}
Arik, S.~{\"O}. and Pfister, T. (2021).
\newblock Tabnet: Attentive interpretable tabular learning.
\newblock {\em ArXiv}, abs/1908.07442.

\bibitem[Blackard and Dean, 1999]{ForestCover}
Blackard, J.~A. and Dean, D.~J. (1999).
\newblock Comparative accuracies of artificial neural networks and discriminant
  analysis in predicting forest cover types from cartographic variables.
\newblock {\em Computers and Electronics in Agriculture}, 24:131--151.

\bibitem[Blanco-Justicia et~al., 2020]{BLANCOJUSTICIA2020105532}
Blanco-Justicia, A., Domingo-Ferrer, J., Martínez, S., and Sánchez, D.
  (2020).
\newblock Machine learning explainability via microaggregation and shallow
  decision trees.
\newblock {\em Knowledge-Based Systems}, 194:105532.

\bibitem[Borisov et~al., 2022]{DeepTLF}
Borisov, V., Broelemann, K., Kasneci, E., and Kasneci, G. (2022).
\newblock Deeptlf: robust deep neural networks for heterogeneous tabular data.
\newblock {\em International Journal of Data Science and Analytics}.

\bibitem[Borisov et~al., 2021]{DeepTabularSurvey}
Borisov, V., Leemann, T., Seßler, K., Haug, J., Pawelczyk, M., and Kasneci, G.
  (2021).
\newblock Deep neural networks and tabular data: A survey.

\bibitem[Chen, 2014]{curse}
Chen, L. (2014).
\newblock {\em Curse of Dimensionality}, pages 1--1.

\bibitem[Chen and Guestrin, 2016]{Chen2016XGBoostAS}
Chen, T. and Guestrin, C. (2016).
\newblock Xgboost: A scalable tree boosting system.
\newblock {\em Proceedings of the 22nd ACM SIGKDD International Conference on
  Knowledge Discovery and Data Mining}.

\bibitem[Cheng et~al., 2016]{wideAndDeep}
Cheng, H.-T., Koc, L., Harmsen, J., Shaked, T., Chandra, T., Aradhye, H.,
  Anderson, G., Corrado, G., Chai, W., Ispir, M., Anil, R., Haque, Z., Hong,
  L., Jain, V., Liu, X., and Shah, H. (2016).
\newblock Wide and deep learning for recommender systems.
\newblock pages 7--10.

\bibitem[Defossez et~al., 2020]{BachProof}
Defossez, A., Bottou, L., Bach, F., and Usunier, N. (2020).
\newblock On the convergence of adam and adagrad.

\bibitem[Duchi et~al., 2011]{Adagrad}
Duchi, J., Hazan, E., and Singer, Y. (2011).
\newblock Adaptive subgradient methods for online learning and stochastic
  optimization.
\newblock {\em Journal of Machine Learning Research}, 12:2121--2159.

\bibitem[Fayaz et~al., 2022]{Fayaz2022}
Fayaz, S.~A., Zaman, M., Kaul, S., and Butt, M.~A. (2022).
\newblock Is deep learning on tabular data enough? an assessment.
\newblock {\em International Journal of Advanced Computer Science and
  Applications}, 13(4).

\bibitem[Frosst and Hinton, 2017]{Frosst2017DistillingAN}
Frosst, N. and Hinton, G.~E. (2017).
\newblock Distilling a neural network into a soft decision tree.
\newblock {\em ArXiv}, abs/1711.09784.

\bibitem[Gorishniy et~al., 2021]{RevisitingDeepForTabular}
Gorishniy, Y., Rubachev, I., Khrulkov, V., and Babenko, A. (2021).
\newblock Revisiting deep learning models for tabular data.

\bibitem[Goutam et~al., 2020]{freezing}
Goutam, K., S, B., Gera, D., and Sarma, R. (2020).
\newblock Layerout: Freezing layers in deep neural networks.
\newblock {\em SN Computer Science}, 1:295.

\bibitem[Gupta and Agrawal, 2022]{embeddingSurvey}
Gupta, M. and Agrawal, P. (2022).
\newblock Compression of deep learning models for text: A survey.
\newblock {\em ACM Trans. Knowl. Discov. Data}, 16(4).

\bibitem[Kadra et~al., 2021]{Kadra2021WelltunedSN}
Kadra, A., Lindauer, M.~T., Hutter, F., and Grabocka, J. (2021).
\newblock Well-tuned simple nets excel on tabular datasets.
\newblock In {\em NeurIPS}.

\bibitem[Kaggle, 2012]{DGK}
Kaggle (2012).
\newblock Don't get kicked competitions.

\bibitem[Kingma and Ba, 2014]{adam}
Kingma, D. and Ba, J. (2014).
\newblock Adam: A method for stochastic optimization.
\newblock {\em International Conference on Learning Representations}.

\bibitem[Kohavi, 1997]{incomeDataset}
Kohavi, R. (1997).
\newblock Scaling up the accuracy of naive-bayes classifiers: a decision-tree
  hybrid.
\newblock {\em KDD}.

\bibitem[Lepchenkov, 2019]{UsedCars}
Lepchenkov, K. (2019).
\newblock Used-cars-catalog.

\bibitem[Luo et~al., 2021]{Luo2021SDTRSD}
Luo, H., Cheng, F., Yu, H., and Yi, Y. (2021).
\newblock Sdtr: Soft decision tree regressor for tabular data.
\newblock {\em IEEE Access}, 9:55999--56011.

\bibitem[Mathov et~al., 2022]{MATHOV2022108377}
Mathov, Y., Levy, E., Katzir, Z., Shabtai, A., and Elovici, Y. (2022).
\newblock Not all datasets are born equal: On heterogeneous tabular data and
  adversarial examples.
\newblock {\em Knowledge-Based Systems}, 242:108377.

\bibitem[of~Chicago, 2021]{Chicago}
of~Chicago, C. (2021).
\newblock {Taxi Trips of Chicago}.

\bibitem[Ostroumova et~al., 2018]{Ostroumova2018CatBoostUB}
Ostroumova, L., Gusev, G., Vorobev, A., Dorogush, A.~V., and Gulin, A. (2018).
\newblock Catboost: unbiased boosting with categorical features.
\newblock In {\em NeurIPS}.

\bibitem[Paszke et~al., 2019]{pytorch}
Paszke, A., Gross, S., Massa, F., Lerer, A., Bradbury, J., Chanan, G., Killeen,
  T., Lin, Z., Gimelshein, N., Antiga, L., Desmaison, A., Kopf, A., Yang, E.,
  DeVito, Z., Raison, M., Tejani, A., Chilamkurthy, S., Steiner, B., Fang, L.,
  Bai, J., and Chintala, S. (2019).
\newblock Pytorch: An imperative style, high-performance deep learning library.

\bibitem[Peseux, 2021]{Peseux2021DifferentiatingRQ}
Peseux, P. (2021).
\newblock Differentiating relational queries.
\newblock In {\em PhD@VLDB}.

\bibitem[Popov et~al., 2019]{NODE}
Popov, S., Morozov, S., and Babenko, A. (2019).
\newblock Neural oblivious decision ensembles for deep learning on tabular
  data.

\bibitem[Rehman, 2021]{MLPandNLP}
Rehman, U. (2021).
\newblock Relation on nlp with machine and language.
\newblock {\em Global Sci-Tech}, 13:39--42.

\bibitem[Robbins and Monro, 1951]{firstSGD}
Robbins, H. and Monro, S. (1951).
\newblock {A Stochastic Approximation Method}.
\newblock {\em The Annals of Mathematical Statistics}, 22(3):400 -- 407.

\bibitem[Rudin, 2019]{Stop}
Rudin, C. (2019).
\newblock Stop explaining black box machine learning models for high stakes
  decisions and use interpretable models instead.
\newblock {\em Nature Machine Intelligence}, 1:206--215.

\bibitem[Shwartz-Ziv and Armon, 2021]{DLisnotyouneed}
Shwartz-Ziv, R. and Armon, A. (2021).
\newblock Tabular data: Deep learning is not all you need.
\newblock {\em Information Fusion}, 81.

\bibitem[Song et~al., 2019]{Song2019AutoIntAF}
Song, W., Shi, C., Xiao, Z., Duan, Z., Xu, Y., Zhang, M., and Tang, J. (2019).
\newblock Autoint: Automatic feature interaction learning via self-attentive
  neural networks.
\newblock {\em Proceedings of the 28th ACM International Conference on
  Information and Knowledge Management}.

\bibitem[Srivastava et~al., 2014]{dropout}
Srivastava, N., Hinton, G., Krizhevsky, A., Sutskever, I., and Salakhutdinov,
  R. (2014).
\newblock Dropout: A simple way to prevent neural networks from overfitting.
\newblock {\em Journal of Machine Learning Research}, 15:1929--1958.

\end{thebibliography}
\appendix
\newpage
\section{Uniform draw}\label{tirage}

Let $Z$ be a non-empty finite set and $T \subset Z$ also non-empty.

We uniformly draw $m > 0$ elements in $Z$ with replacement. We focus on the first drawing where at least one of the $m$ drawn elements belongs to $T$. We note $\Tilde{K}$ this drawing. Thus:

\begin{align}
    \mathbb{P}(\Tilde{K} = 1) &= 1 - (\frac{\abs{Z}-\abs{T}}{\abs{Z}})^m = P_1\\
    \mathbb{P}(\Tilde{K} = n) &= (1 - P_1)^{n-1} P_1
\end{align}

\begin{theorem}[Stopping time]\label{theorem:stop}
$\esp{\Tilde{K}} = \frac{1}{P_1}$
\end{theorem}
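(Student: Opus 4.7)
The plan is to recognize that $\tilde{K}$ is (by the very form $\mathbb{P}(\tilde K = n) = (1-P_1)^{n-1} P_1$ given just before the theorem) a geometric random variable with success probability $P_1$, so the result reduces to the standard formula for the mean of a geometric distribution. I would make this explicit by noting that each round of $m$ draws is an i.i.d.\ Bernoulli trial which succeeds (i.e.\ hits $T$ at least once) with probability $P_1 = 1 - \left(\frac{|Z|-|T|}{|Z|}\right)^m$, so $\tilde{K}$ is simply the index of the first success.

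From there I would carry out the computation in one of two ways. The direct way is to write
\begin{equation*}
\mathbb{E}[\tilde K] \;=\; \sum_{n=1}^{\infty} n\,(1-P_1)^{n-1} P_1 \;=\; P_1 \sum_{n=1}^{\infty} n\,(1-P_1)^{n-1},
\end{equation*}
and then invoke the standard power series identity $\sum_{n\geq 1} n x^{n-1} = (1-x)^{-2}$, valid for $|x|<1$ (which applies here because $P_1 \in (0,1]$ since $T$ is non-empty and $m>0$), yielding $\mathbb{E}[\tilde K] = P_1 / P_1^2 = 1/P_1$.

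Alternatively, and perhaps more cleanly, I would use a one-step conditioning argument exploiting the memoryless structure: either the first round of $m$ draws already hits $T$ (probability $P_1$, contributing $1$ to $\tilde K$), or it does not (probability $1-P_1$), in which case by independence the remaining waiting time is distributed as $\tilde K$ itself. This gives the functional equation
\begin{equation*}
\mathbb{E}[\tilde K] \;=\; P_1 \cdot 1 + (1 - P_1)\bigl(1 + \mathbb{E}[\tilde K]\bigr),
\end{equation*}
which solves immediately to $\mathbb{E}[\tilde K] = 1/P_1$.

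There is essentially no obstacle: the only subtlety worth checking is that the expectation is well-defined, which follows because $P_1 > 0$ (guaranteed by $|T|\ge 1$ and $m \ge 1$), so the geometric series converges absolutely. I would favor the second proof since it avoids invoking the power-series identity and makes the role of the hypotheses on $Z$, $T$, and $m$ transparent.
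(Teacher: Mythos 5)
Your proposal is correct, and it actually contains two arguments. The first (direct summation plus the power-series identity $\sum_{n\ge 1} n x^{n-1} = (1-x)^{-2}$) is essentially the paper's own proof: the appendix likewise expands $\mathbb{E}[\tilde K]=\sum_n n(1-P_1)^{n-1}P_1$ and evaluates $\sum_n n x^n = x/(1-x)^2$ by differentiating the geometric series, then simplifies to $1/P_1$. Your second argument --- the one-step conditioning $\mathbb{E}[\tilde K] = P_1\cdot 1 + (1-P_1)(1+\mathbb{E}[\tilde K])$ exploiting the memoryless/renewal structure of i.i.d.\ rounds --- is a genuinely different route that the paper does not take. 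It is more elementary in that it avoids manipulating power series, and you correctly flag the one point where it could go wrong: the functional equation only pins down $\mathbb{E}[\tilde K]$ once finiteness is established (otherwise $\infty=\infty$ is also consistent), and you justify finiteness from $P_1>0$. A small bonus of your treatment over the paper's is care with the edge case $P_1=1$: the paper factors out $P_1/(1-P_1)$, which is ill-defined when $T=Z$, whereas both of your arguments handle $P_1\in(0,1]$ uniformly. Either of your two proofs would be acceptable; the first is the paper's, the second is a clean alternative.
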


\begin{proof}
\begin{align*}
    \esp{\Tilde{K}} = \sum_{n = 1}^{\infty} n \mathbb{P}(\Tilde{K} = n) &= \sum_{n = 1}^{\infty} n (1 - P_1)^{n-1} P_1\\
    &= \frac{P_1}{1 - P_1} \sum_{n = 1}^{\infty} n  (1 - P_1)^{n} 
\end{align*}

For $ 0 < x < 1$  we get:

\begin{align*}
    \sum_{n = 1}^{\infty} n x^n &= \sum_{n = 1}^{\infty} x \frac{\partial x^n}{\partial x}\\
                                &= x \frac{\partial }{\partial x} \sum_{n = 1}^{\infty} x^n\\
                                &= x \frac{\partial }{\partial x} \sum_{n = 0}^{\infty} x^n\\
                                &= x \frac{\partial }{\partial x} \frac{1}{1-x}\\
                                &= \frac{x}{(1-x)^2}
\end{align*}

Then 

\begin{align*}
    \sum_{n = 1}^{\infty} n \mathbb{P}(\Tilde{K} = n) &= \frac{P_1}{1 - P_1} \frac{1-P_1}{P_{1}^2}\\
    &= \frac{1}{P_1}
\end{align*}
\end{proof}

\begin{remark}
It is the same result if the drawings are done without replacement. The only difference is a higher $P_1$.
\end{remark}
\section{Estimator}

Let $Z$ be a non-empty finite set and $T \subset Z$ also non-empty.

We have a score function $s$ on $T$:

\begin{equation*}
\begin{split}
    s \quad : \quad &T \longrightarrow \mathbb{R}\\
    &t                 \longrightarrow s(t)
\end{split}
\end{equation*}

We aim to estimate 
\begin{equation*}
    s_T = \frac{1}{\abs{T}} \sum_{x \in T} s(x)
\end{equation*}

Let $(M_k)_{k \leq K}$ a serie of $K$ draws uniform with replacement of $m$ elements of $Z$.

\begin{remark}\label{rem:Kp1}
    Thanks to Theorem \ref{theorem:stop} we can ignore the first draws $M_{0}$ such as $M_{0} \cap T = \emptyset$
\end{remark}

One notes

\begin{align*}
    M_k &= (M_k \cap T ) \sqcup (M_k \cap (Z \backslash T)) \\
        &= (M_k^T ) \sqcup (M_k \cap (Z \backslash T))
\end{align*}

\[
   avg(M_k^T) = 
\begin{cases}
     \quad 0 \quad \text{if} \quad M_k^T  = \emptyset\\
    \frac{1}{\abs{M_k^T}} \sum_{x \in M_k^T}s(x) \quad \text{otherwise}
\end{cases}
\]
and

\begin{equation*}
    \bar{K} = \abs{ \{ k \leq K | M_k^T \neq \emptyset \}}
\end{equation*}

Thanks to Remark \ref{rem:Kp1} we have $\bar{K} \geq 1$. Then the proposed estimator is $\hat{a}$:

\begin{equation*}
    \hat{a} = \frac{1}{\bar{K}} \sum_{k = 1}^{K}avg(M_k^T)
\end{equation*}

\begin{theorem}[Unbiased estimator]\label{theorem:unbiased}
$\hat{a}$ is an unbiased estimator of $s_T$
\end{theorem}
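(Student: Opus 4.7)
The plan is to prove the identity $\mathbb{E}[\hat a]=s_T$ by a two-step conditioning argument: first establish that, for each single draw $M_k$, the conditional expectation of $avg(M_k^T)$ given that $M_k^T$ is non-empty equals $s_T$; then average over the $\bar K$ non-empty draws, exploiting the independence of the $M_k$'s to avoid any subtle coupling between the random weight $1/\bar K$ and the individual summands.

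The key step is a symmetry argument. For each $x\in T$, let $N_{k,x}$ denote the number of occurrences of $x$ inside the multiset $M_k$. Since $M_k$ is a uniform draw (with replacement) from $Z$, the joint distribution of $(N_{k,x})_{x\in T}$ is exchangeable over the alphabet $T$. On the event $\{M_k^T\neq\emptyset\}$, we have $\sum_{x\in T}N_{k,x}=|M_k^T|>0$, and
\begin{equation*}
    avg(M_k^T)\;=\;\sum_{x\in T} s(x)\,\frac{N_{k,x}}{|M_k^T|}.
\end{equation*}
By exchangeability, the weights $N_{k,x}/|M_k^T|$ share the same conditional distribution given $\{M_k^T\neq\emptyset\}$, and they sum to $1$ on this event, so each has conditional expectation $1/|T|$. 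Taking expectation yields
\begin{equation*}
    \mathbb{E}\bigl[avg(M_k^T)\,\bigm|\,M_k^T\neq\emptyset\bigr]\;=\;\frac{1}{|T|}\sum_{x\in T}s(x)\;=\;s_T.
\end{equation*}
This is the step that justifies why dividing by the realized (random) size $|M_k^T|$ still returns an unbiased value on each non-empty draw.

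With this in hand, I would condition on the random set $J=\{k\leq K : M_k^T\neq\emptyset\}$, whose cardinality is $\bar K\geq 1$ by Remark~\ref{rem:K>1}. Because the $(M_k)$'s are mutually independent, for any $k\in J$ the conditional distribution of $avg(M_k^T)$ given $J$ reduces to its conditional distribution given $\{M_k^T\neq\emptyset\}$ alone; for $k\notin J$ we have $avg(M_k^T)=0$ deterministically. Consequently,
\begin{equation*}
    \mathbb{E}[\hat a\mid J]\;=\;\frac{1}{\bar K}\sum_{k\in J}\mathbb{E}\bigl[avg(M_k^T)\mid J\bigr]\;=\;\frac{1}{\bar K}\cdot\bar K\cdot s_T\;=\;s_T,
\end{equation*}
and an outer expectation over $J$ concludes.

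The main obstacle is the first step: it is tempting to ``cancel'' the $1/|M_k^T|$ factor against the size of the conditional sample, but one must carefully handle the fact that $|M_k^T|$ is itself random and correlated with the $s(Y_i)$'s. Recasting the average as a convex combination with exchangeable weights sidesteps this cleanly, after which the outer conditioning on $J$ is essentially bookkeeping.
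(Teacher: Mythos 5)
Your proof is correct, and it follows the same skeleton as the paper's own argument: show that each non-empty draw contributes $s_T$ in expectation, then average over the $\bar K$ non-empty draws. The difference is that the paper's proof is a purely formal cancellation --- it pushes the expectation inside the random factors $1/\bar{K}$ and $1/\abs{M_k^T}$ and writes $\frac{\bar{K}}{\bar{K}}\frac{\abs{M_k^T}}{\abs{M_k^T}}\,\esp{s_T}$ without addressing the fact that both normalizers are random and correlated with the summands --- whereas you supply exactly the two justifications that make this legitimate. Your exchangeability lemma (the weights $N_{k,x}/\abs{M_k^T}$ are exchangeable over $x\in T$ and sum to one on $\{M_k^T\neq\emptyset\}$, hence each has conditional mean $1/\abs{T}$) is the real content of the theorem and is entirely absent from the paper; it also works under either reading of $M_k^T$ (multiset or distinct elements). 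Likewise, conditioning on the index set $J$ and invoking independence of the $M_k$ to decouple $1/\bar K$ from the individual averages is the correct way to handle the outer normalization, where the paper simply cancels. In short: same route, but your version is the rigorous one; the paper's is a sketch of it.
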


\begin{proof}
\begin{align*}
    \esp{\Tilde{a}} &= \frac{1}{\hat{K}} \sum_{\underset{k = 1}{M_k^T \neq \emptyset}}^{K} \frac{1}{\abs{(M_k^T)}} \sum_{x \in M_k^T} \esp{s(x)}\\
    &= \frac{\bar{K}}{\bar{K}} \frac{\abs{M_k^T}}{\abs{M_k^T}} \esp{s_T}\\
    &= s_T
\end{align*}
\end{proof}

\section{No guarantees for multiplicative models}\label{sec:app-multi-no-gaurantee}
Let's consider  $h : \reel^3 \rightarrow \reel$ such as $h(x,y,z) = xyz$. Then

\begin{equation}
    \nabla h (x,y,z)=     \begin{pmatrix}
    yz\\
    xz\\
    xy\\
    \end{pmatrix} 
\end{equation}
Then the difference of the gradient cannot be bounded above by the difference of the parameters, i.e. the gradient is not L-Lipschitz-continuous:

\begin{equation*}
    \norm{\nabla h (a,a,a) - \nabla h (b,b,b)}_{2}^2 = 3 (a^2 -b^2)
      = 3 (a - b)^2 (a+b)^2 = (a+b)^2  \times \norm{ \begin{pmatrix}
    a\\
    a\\
    a\\
    \end{pmatrix}  - \begin{pmatrix}
    b\\
    b\\
    b\\
    \end{pmatrix} }_{2}^2 
\end{equation*}

\newpage
\section{Deep learning results}\label{sec:app-DL}

\begin{table}[h!]
    \begin{footnotesize}
    \begin{center}
    \begin{tabular}{l|cc:cc:cc}
    \toprule
    Dataset               &   SGD           & SGD \& \tecnameAbrv & Adagrad & Adagrad \& \tecnameAbrv & Adam        & Adam \& \tecnameAbrv \\
    \midrule
    \textbf{ACI         } & 0.48 $\pm$ 0.24 & 0.24 $\pm$ 0.01 & 0.48 $\pm$ 0.23 & \textbf{0.21 $\pm$ 0.01} & 0.60 $\pm$ 0.24 & 0.46 $\pm$ 0.22 \\ 
    \textbf{DGK         } & 0.56 $\pm$ 0.35 & \textbf{0.12 $\pm$ 0.01} & 0.60 $\pm$ 0.35 & \textbf{0.12 $\pm$ 0.01} & 0.41 $\pm$ 0.36 & 0.35 $\pm$ 0.35 \\ 
    \textbf{Forest Cover} & 1.98 $\pm$ 0.02 & 1.95 $\pm$ 0.01 & 1.98 $\pm$ 0.02 & \textbf{1.44 $\pm$ 0.04} & 1.98 $\pm$ 0.04 & 1.95 $\pm$ 0.03 \\ 
    \textbf{KDD99       } & 1.75 $\pm$ 0.20 & \textbf{0.93 $\pm$ 0.12} & 1.80 $\pm$ 0.17 & \textbf{0.07 $\pm$ 0.03} & 1.94 $\pm$ 0.19 & 1.63 $\pm$ 0.19 \\ 
    \textbf{Used Cars   } & 1.07 $\pm$ 0.06 & \textbf{0.98 $\pm$ 0.01} & 1.08 $\pm$ 0.07 & \textbf{0.99 $\pm$ 0.01} & 1.10 $\pm$ 0.08 & 1.02 $\pm$ 0.04 \\ 

    \bottomrule
    \end{tabular}
    \caption{Results with mlp and batch of 32 (RMSE)}
    \label{tab:resultsMLP32}
    \end{center}
    \end{footnotesize}
\end{table}

\begin{table}[h!]
    \begin{footnotesize}
    \begin{center}
    \begin{tabular}{l|cc:cc:cc}
    \toprule
    Dataset               &   SGD           & SGD \& \tecnameAbrv & Adagrad & Adagrad \& \tecnameAbrv & Adam        & Adam \& \tecnameAbrv \\
    \midrule
    \textbf{ACI         } & 0.49 $\pm$ 0.10 & \textbf{0.20 $\pm$ 0.01} & 0.48 $\pm$ 0.14 & \textbf{0.19 $\pm$ 0.01} & 0.49 $\pm$ 0.14 & \textbf{0.19 $\pm$ 0.01} \\ 
    \textbf{DGK         } & 0.45 $\pm$ 0.19 & \textbf{0.12 $\pm$ 0.01} & 0.50 $\pm$ 0.20 & \textbf{0.12 $\pm$ 0.01} & 0.52 $\pm$ 0.24 & \textbf{0.14 $\pm$ 0.01} \\ 
    \textbf{Forest Cover} & 2.01 $\pm$ 0.04 & \textbf{1.18 $\pm$ 0.01} & 2.01 $\pm$ 0.04 & \textbf{1.03 $\pm$ 0.01} & 2.00 $\pm$ 0.04 & \textbf{1.05 $\pm$ 0.01} \\ 
    \textbf{KDD99       } & 1.81 $\pm$ 0.10 & \textbf{0.07 $\pm$ 0.01} & 1.84 $\pm$ 0.08 & \textbf{0.01 $\pm$ 0.01} & 1.82 $\pm$ 0.12 & \textbf{0.04 $\pm$ 0.01} \\ 
    \textbf{Used Cars   } & 1.23 $\pm$ 0.10 & \textbf{1.02 $\pm$ 0.01} & 1.20 $\pm$ 0.09 & \textbf{1.04 $\pm$ 0.01} & 1.18 $\pm$ 0.05 & \textbf{1.03 $\pm$ 0.01} \\ 

    \bottomrule
    \end{tabular}
    \caption{Results with resnet and batch of 32 (RMSE)}
    \label{tab:resultsRESNET32}
    \end{center}
    \end{footnotesize}
\end{table}

\begin{table}[h!]
    \begin{footnotesize}
    \begin{center}
    \begin{tabular}{l|cc:cc:cc}
    \toprule
    Dataset               &   SGD           & SGD \& \tecnameAbrv & Adagrad & Adagrad \& \tecnameAbrv & Adam        & Adam \& \tecnameAbrv \\
    \midrule
    \textbf{ACI         } & 0.45 $\pm$ 0.25 & 0.25 $\pm$ 0.02 & 0.44 $\pm$ 0.23 & 0.20 $\pm$ 0.03 & 0.53 $\pm$ 0.23 & 0.41 $\pm$ 0.22 \\ 
    \textbf{DGK         } & 0.32 $\pm$ 0.32 & 0.11 $\pm$ 0.01 & 0.43 $\pm$ 0.38 & 0.12 $\pm$ 0.01 & 0.43 $\pm$ 0.38 & 0.29 $\pm$ 0.30 \\ 
    \textbf{Forest Cover} & 2.00 $\pm$ 0.03 & 1.98 $\pm$ 0.02 & 1.99 $\pm$ 0.03 & \textbf{1.54 $\pm$ 0.06} & 1.99 $\pm$ 0.03 & 1.97 $\pm$ 0.02 \\ 
    \textbf{KDD99       } & 1.84 $\pm$ 0.14 & \textbf{1.32 $\pm$ 0.11} & 1.85 $\pm$ 0.23 & \textbf{0.13 $\pm$ 0.08} & 1.95 $\pm$ 0.19 & 1.74 $\pm$ 0.20 \\ 
    \textbf{Used Cars   } & 1.10 $\pm$ 0.06 & \textbf{1.01 $\pm$ 0.01} & 1.11 $\pm$ 0.09 & 1.01 $\pm$ 0.01 & 1.11 $\pm$ 0.10 & 1.05 $\pm$ 0.06 \\ 

    \bottomrule
    \end{tabular}
    \caption{Results with mlp and batch of 64 (RMSE)}
    \label{tab:resultsMLP64}
    \end{center}
    \end{footnotesize}
\end{table}

\begin{table}[h!]
    \begin{footnotesize}
    \begin{center}
    \begin{tabular}{l|cc:cc:cc}
    \toprule
    Dataset               &   SGD           & SGD \& \tecnameAbrv & Adagrad & Adagrad \& \tecnameAbrv & Adam        & Adam \& \tecnameAbrv \\
    \midrule
    \textbf{ACI         } & 0.54 $\pm$ 0.12 & \textbf{0.21 $\pm$ 0.01} & 0.55 $\pm$ 0.11 & \textbf{0.17 $\pm$ 0.01} & 0.57 $\pm$ 0.14 & \textbf{0.16 $\pm$ 0.01} \\ 
    \textbf{DGK         } & 0.45 $\pm$ 0.12 & \textbf{0.11 $\pm$ 0.01} & 0.52 $\pm$ 0.17 & \textbf{0.12 $\pm$ 0.01} & 0.44 $\pm$ 0.16 & \textbf{0.13 $\pm$ 0.01} \\ 
    \textbf{Forest Cover} & 2.02 $\pm$ 0.05 & \textbf{1.37 $\pm$ 0.03} & 1.99 $\pm$ 0.03 & \textbf{1.02 $\pm$ 0.01} & 2.02 $\pm$ 0.04 & \textbf{1.06 $\pm$ 0.01} \\ 
    \textbf{KDD99       } & 1.81 $\pm$ 0.15 & \textbf{0.12 $\pm$ 0.01} & 1.87 $\pm$ 0.07 & \textbf{0.02 $\pm$ 0.01} & 1.89 $\pm$ 0.09 & \textbf{0.06 $\pm$ 0.01} \\ 
    \textbf{Used Cars   } & 1.13 $\pm$ 0.06 & \textbf{0.99 $\pm$ 0.01} & 1.15 $\pm$ 0.07 & \textbf{1.02 $\pm$ 0.01} & 1.20 $\pm$ 0.17 & \textbf{1.01 $\pm$ 0.01} \\ 

    \bottomrule
    \end{tabular}
    \caption{Results with resnet and batch of 64 (RMSE)}
    \label{tab:resultsRESNET64}
    \end{center}
    \end{footnotesize}
\end{table}

\begin{table}[h!]
    \begin{footnotesize}
    \begin{center}
    \begin{tabular}{l|cc:cc:cc}
    \toprule
    Dataset               &   SGD           & SGD \& \tecnameAbrv & Adagrad & Adagrad \& \tecnameAbrv & Adam        & Adam \& \tecnameAbrv \\
    \midrule
    \textbf{ACI         } & 0.45 $\pm$ 0.22 & 0.32 $\pm$ 0.16 & 0.46 $\pm$ 0.23 & 0.25 $\pm$ 0.02 & 0.48 $\pm$ 0.23 & 0.42 $\pm$ 0.22 \\ 
    \textbf{DGK         } & 0.58 $\pm$ 0.37 & 0.30 $\pm$ 0.30 & 0.58 $\pm$ 0.35 & \textbf{0.12 $\pm$ 0.01} & 0.74 $\pm$ 0.29 & 0.49 $\pm$ 0.30 \\ 
    \textbf{Forest Cover} & 1.98 $\pm$ 0.03 & 1.97 $\pm$ 0.02 & 1.99 $\pm$ 0.03 & \textbf{1.60 $\pm$ 0.07} & 1.99 $\pm$ 0.02 & 1.97 $\pm$ 0.02 \\ 
    \textbf{KDD99       } & 1.80 $\pm$ 0.19 & 1.50 $\pm$ 0.15 & 1.89 $\pm$ 0.19 & \textbf{0.45 $\pm$ 0.47} & 1.80 $\pm$ 0.18 & 1.69 $\pm$ 0.18 \\ 
    \textbf{Used Cars   } & 1.16 $\pm$ 0.09 & \textbf{1.03 $\pm$ 0.02} & 1.12 $\pm$ 0.08 & 1.02 $\pm$ 0.01 & 1.13 $\pm$ 0.11 & 1.08 $\pm$ 0.09 \\ 

    \bottomrule
    \end{tabular}
    \caption{Results with mlp and batch of 128 (RMSE)}
    \label{tab:resultsMLP128}
    \end{center}
    \end{footnotesize}
\end{table}

\begin{table}[h!]
    \begin{footnotesize}
    \begin{center}
    \begin{tabular}{l|cc:cc:cc}
    \toprule
    Dataset               &   SGD           & SGD \& \tecnameAbrv & Adagrad & Adagrad \& \tecnameAbrv & Adam        & Adam \& \tecnameAbrv \\
    \midrule
    \textbf{ACI         } & 0.47 $\pm$ 0.12 & \textbf{0.24 $\pm$ 0.01} & 0.48 $\pm$ 0.15 & \textbf{0.19 $\pm$ 0.01} & 0.56 $\pm$ 0.09 & \textbf{0.19 $\pm$ 0.01} \\ 
    \textbf{DGK         } & 0.50 $\pm$ 0.19 & \textbf{0.11 $\pm$ 0.01} & 0.47 $\pm$ 0.18 & \textbf{0.12 $\pm$ 0.01} & 0.39 $\pm$ 0.13 & \textbf{0.13 $\pm$ 0.01} \\ 
    \textbf{Forest Cover} & 2.00 $\pm$ 0.03 & \textbf{1.59 $\pm$ 0.02} & 2.00 $\pm$ 0.03 & \textbf{1.01 $\pm$ 0.01} & 2.00 $\pm$ 0.02 & \textbf{1.04 $\pm$ 0.01} \\ 
    \textbf{KDD99       } & 1.85 $\pm$ 0.15 & \textbf{0.22 $\pm$ 0.01} & 1.90 $\pm$ 0.11 & \textbf{0.01 $\pm$ 0.01} & 1.82 $\pm$ 0.13 & \textbf{0.08 $\pm$ 0.01} \\ 
    \textbf{Used Cars   } & 1.17 $\pm$ 0.05 & \textbf{1.02 $\pm$ 0.01} & 1.17 $\pm$ 0.04 & \textbf{1.06 $\pm$ 0.02} & 1.16 $\pm$ 0.07 & \textbf{1.03 $\pm$ 0.01} \\ 

    \bottomrule
    \end{tabular}
    \caption{Results with resnet and batch of 128 (RMSE)}
    \label{tab:resultsRESNET128}
    \end{center}
    \end{footnotesize}
\end{table}

\begin{table}[h!]
    \begin{footnotesize}
    \begin{center}
    \begin{tabular}{l|cc:cc:cc}
    \toprule
    Dataset               &   SGD           & SGD \& \tecnameAbrv & Adagrad & Adagrad \& \tecnameAbrv & Adam        & Adam \& \tecnameAbrv \\
    \midrule
    \textbf{ACI         } & 0.50 $\pm$ 0.26 & 0.49 $\pm$ 0.25 & 0.50 $\pm$ 0.23 & \textbf{0.23 $\pm$ 0.01} & 0.50 $\pm$ 0.26 & 0.43 $\pm$ 0.23 \\ 
    \textbf{DGK         } & 0.34 $\pm$ 0.36 & 0.30 $\pm$ 0.31 & 0.53 $\pm$ 0.36 & \textbf{0.11 $\pm$ 0.01} & 0.50 $\pm$ 0.36 & 0.28 $\pm$ 0.29 \\ 
    \textbf{Forest Cover} & 1.98 $\pm$ 0.03 & 1.98 $\pm$ 0.02 & 1.98 $\pm$ 0.02 & \textbf{1.79 $\pm$ 0.06} & 2.00 $\pm$ 0.03 & 1.97 $\pm$ 0.02 \\ 
    \textbf{KDD99       } & 1.84 $\pm$ 0.24 & 1.70 $\pm$ 0.23 & 1.74 $\pm$ 0.10 & \textbf{0.57 $\pm$ 0.33} & 1.88 $\pm$ 0.23 & 1.78 $\pm$ 0.23 \\ 
    \textbf{Used Cars   } & 1.08 $\pm$ 0.07 & 1.04 $\pm$ 0.02 & 1.10 $\pm$ 0.06 & \textbf{1.02 $\pm$ 0.01} & 1.11 $\pm$ 0.07 & 1.07 $\pm$ 0.05 \\ 

    \bottomrule
    \end{tabular}
    \caption{Results with mlp and batch of 256 (RMSE)}
    \label{tab:resultsMLP256}
    \end{center}
    \end{footnotesize}
\end{table}

\begin{table}[h!]
    \begin{footnotesize}
    \begin{center}
    \begin{tabular}{l|cc:cc:cc}
    \toprule
    Dataset               &   SGD           & SGD \& \tecnameAbrv & Adagrad & Adagrad \& \tecnameAbrv & Adam        & Adam \& \tecnameAbrv \\
    \midrule
    \textbf{ACI         } & 0.54 $\pm$ 0.14 & \textbf{0.24 $\pm$ 0.01} & 0.53 $\pm$ 0.13 & \textbf{0.19 $\pm$ 0.01} & 0.52 $\pm$ 0.10 & \textbf{0.19 $\pm$ 0.01} \\ 
    \textbf{DGK         } & 0.59 $\pm$ 0.16 & \textbf{0.11 $\pm$ 0.01} & 0.50 $\pm$ 0.16 & \textbf{0.12 $\pm$ 0.01} & 0.48 $\pm$ 0.19 & \textbf{0.14 $\pm$ 0.01} \\ 
    \textbf{Forest Cover} & 1.99 $\pm$ 0.04 & \textbf{1.73 $\pm$ 0.03} & 2.01 $\pm$ 0.02 & \textbf{1.03 $\pm$ 0.01} & 2.03 $\pm$ 0.02 & \textbf{1.07 $\pm$ 0.01} \\ 
    \textbf{KDD99       } & 1.76 $\pm$ 0.07 & \textbf{0.47 $\pm$ 0.04} & 1.80 $\pm$ 0.05 & \textbf{0.02 $\pm$ 0.01} & 1.86 $\pm$ 0.09 & \textbf{0.16 $\pm$ 0.02} \\ 
    \textbf{Used Cars   } & 1.17 $\pm$ 0.11 & \textbf{1.00 $\pm$ 0.01} & 1.17 $\pm$ 0.11 & 1.06 $\pm$ 0.01 & 1.13 $\pm$ 0.06 & \textbf{1.01 $\pm$ 0.01} \\ 

    \bottomrule
    \end{tabular}
    \caption{Results with resnet and batch of 256 (RMSE)}
    \label{tab:resultsRESNET256}
    \end{center}
    \end{footnotesize}
\end{table}

\begin{table}[h!]
    \begin{footnotesize}
    \begin{center}
    \begin{tabular}{l|cc:cc:cc}
    \toprule
    Dataset               &   SGD           & SGD \& \tecnameAbrv & Adagrad & Adagrad \& \tecnameAbrv & Adam        & Adam \& \tecnameAbrv \\
    \midrule
    \textbf{ACI         } & 0.56 $\pm$ 0.24 & 0.50 $\pm$ 0.24 & 0.55 $\pm$ 0.26 & 0.32 $\pm$ 0.17 & 0.45 $\pm$ 0.26 & 0.41 $\pm$ 0.24 \\ 
    \textbf{DGK         } & 0.54 $\pm$ 0.35 & 0.47 $\pm$ 0.36 & 0.80 $\pm$ 0.23 & \textbf{0.22 $\pm$ 0.11} & 0.51 $\pm$ 0.38 & 0.50 $\pm$ 0.38 \\ 
    \textbf{Forest Cover} & 1.97 $\pm$ 0.02 & 1.97 $\pm$ 0.02 & 2.01 $\pm$ 0.03 & \textbf{1.92 $\pm$ 0.02} & 2.01 $\pm$ 0.03 & 1.99 $\pm$ 0.02 \\ 
    \textbf{KDD99       } & 1.82 $\pm$ 0.17 & 1.74 $\pm$ 0.16 & 1.89 $\pm$ 0.18 & \textbf{1.41 $\pm$ 0.17} & 1.85 $\pm$ 0.20 & 1.79 $\pm$ 0.20 \\ 
    \textbf{Used Cars   } & 1.10 $\pm$ 0.08 & 1.05 $\pm$ 0.04 & 1.10 $\pm$ 0.07 & 0.99 $\pm$ 0.03 & 1.11 $\pm$ 0.11 & 1.08 $\pm$ 0.08 \\ 

    \bottomrule
    \end{tabular}
    \caption{Results with mlp and batch of 512 (RMSE)}
    \label{tab:resultsMLP512}
    \end{center}
    \end{footnotesize}
\end{table}

\begin{table}[h!]
    \begin{footnotesize}
    \begin{center}
    \begin{tabular}{l|cc:cc:cc}
    \toprule
    Dataset               &   SGD           & SGD \& \tecnameAbrv & Adagrad & Adagrad \& \tecnameAbrv & Adam        & Adam \& \tecnameAbrv \\
    \midrule
    \textbf{ACI         } & 0.47 $\pm$ 0.15 & \textbf{0.25 $\pm$ 0.02} & 0.49 $\pm$ 0.11 & \textbf{0.18 $\pm$ 0.01} & 0.54 $\pm$ 0.15 & \textbf{0.19 $\pm$ 0.01} \\ 
    \textbf{DGK         } & 0.52 $\pm$ 0.21 & \textbf{0.13 $\pm$ 0.01} & 0.60 $\pm$ 0.21 & \textbf{0.12 $\pm$ 0.01} & 0.46 $\pm$ 0.17 & \textbf{0.13 $\pm$ 0.01} \\ 
    \textbf{Forest Cover} & 2.03 $\pm$ 0.04 & \textbf{1.86 $\pm$ 0.03} & 2.01 $\pm$ 0.04 & \textbf{1.02 $\pm$ 0.01} & 2.00 $\pm$ 0.05 & \textbf{1.08 $\pm$ 0.01} \\ 
    \textbf{KDD99       } & 1.79 $\pm$ 0.07 & \textbf{0.88 $\pm$ 0.03} & 1.84 $\pm$ 0.10 & \textbf{0.02 $\pm$ 0.01} & 1.84 $\pm$ 0.08 & \textbf{0.22 $\pm$ 0.04} \\ 
    \textbf{Used Cars   } & 1.18 $\pm$ 0.07 & \textbf{1.03 $\pm$ 0.01} & 1.15 $\pm$ 0.08 & \textbf{1.04 $\pm$ 0.01} & 1.14 $\pm$ 0.05 & \textbf{1.02 $\pm$ 0.01} \\ 

    \bottomrule
    \end{tabular}
    \caption{Results with resnet and batch of 512 (RMSE)}
    \label{tab:resultsRESNET512}
    \end{center}
    \end{footnotesize}
\end{table}

\begin{table}[h!]
    \begin{footnotesize}
    \begin{center}
    \begin{tabular}{l|cc:cc:cc}
    \toprule
    Dataset               &   SGD           & SGD \& \tecnameAbrv & Adagrad & Adagrad \& \tecnameAbrv & Adam        & Adam \& \tecnameAbrv \\
    \midrule
    \textbf{ACI         } & 0.51 $\pm$ 0.26 & 0.50 $\pm$ 0.26 & 0.46 $\pm$ 0.26 & 0.36 $\pm$ 0.21 & 0.48 $\pm$ 0.25 & 0.45 $\pm$ 0.25 \\ 
    \textbf{DGK         } & 0.42 $\pm$ 0.37 & 0.42 $\pm$ 0.37 & 0.49 $\pm$ 0.37 & 0.16 $\pm$ 0.06 & 0.59 $\pm$ 0.36 & 0.58 $\pm$ 0.37 \\ 
    \textbf{Forest Cover} & 1.99 $\pm$ 0.03 & 1.99 $\pm$ 0.03 & 1.99 $\pm$ 0.02 & 1.95 $\pm$ 0.01 & 1.99 $\pm$ 0.03 & 1.98 $\pm$ 0.03 \\ 
    \textbf{KDD99       } & 1.83 $\pm$ 0.27 & 1.77 $\pm$ 0.26 & 1.91 $\pm$ 0.29 & 1.67 $\pm$ 0.30 & 1.82 $\pm$ 0.21 & 1.78 $\pm$ 0.20 \\ 
    \textbf{Used Cars   } & 1.08 $\pm$ 0.08 & 1.06 $\pm$ 0.06 & 1.19 $\pm$ 0.13 & 1.08 $\pm$ 0.08 & 1.09 $\pm$ 0.07 & 1.07 $\pm$ 0.06 \\ 

    \bottomrule
    \end{tabular}
    \caption{Results with mlp and batch of 1024 (RMSE)}
    \label{tab:resultsMLP1024}
    \end{center}
    \end{footnotesize}
\end{table}

\begin{table}[h!]
    \begin{footnotesize}
    \begin{center}
    \begin{tabular}{l|cc:cc:cc}
    \toprule
    Dataset               &   SGD           & SGD \& \tecnameAbrv & Adagrad & Adagrad \& \tecnameAbrv & Adam        & Adam \& \tecnameAbrv \\
    \midrule
    \textbf{ACI         } & 0.53 $\pm$ 0.13 & \textbf{0.32 $\pm$ 0.07} & 0.51 $\pm$ 0.10 & \textbf{0.18 $\pm$ 0.01} & 0.54 $\pm$ 0.15 & \textbf{0.19 $\pm$ 0.01} \\ 
    \textbf{DGK         } & 0.48 $\pm$ 0.20 & \textbf{0.18 $\pm$ 0.04} & 0.44 $\pm$ 0.14 & \textbf{0.13 $\pm$ 0.01} & 0.48 $\pm$ 0.19 & \textbf{0.15 $\pm$ 0.01} \\ 
    \textbf{Forest Cover} & 2.00 $\pm$ 0.04 & \textbf{1.88 $\pm$ 0.04} & 2.01 $\pm$ 0.04 & \textbf{1.04 $\pm$ 0.01} & 2.03 $\pm$ 0.04 & \textbf{1.13 $\pm$ 0.01} \\ 
    \textbf{KDD99       } & 1.80 $\pm$ 0.10 & \textbf{1.12 $\pm$ 0.09} & 1.86 $\pm$ 0.10 & \textbf{0.05 $\pm$ 0.01} & 1.81 $\pm$ 0.06 & \textbf{0.32 $\pm$ 0.07} \\ 
    \textbf{Used Cars   } & 1.11 $\pm$ 0.02 & \textbf{1.05 $\pm$ 0.01} & 1.19 $\pm$ 0.09 & \textbf{1.03 $\pm$ 0.01} & 1.12 $\pm$ 0.03 & \textbf{1.01 $\pm$ 0.01} \\ 

    \bottomrule
    \end{tabular}
    \caption{Results with resnet and batch of 1024 (RMSE)}
    \label{tab:resultsRESNET1024}
    \end{center}
    \end{footnotesize}
\end{table}

\end{document}